\pgfplotsset{compat=1.14}
\newcolumntype{C}[1]{>{\centering\arraybackslash}p{#1}}
\newtheorem{definition}{Definition}[section]
\newtheorem{theorem}{Theorem}[section]
\newtheorem{corollary}{Corollary}[section]
\definecolor{lightyellow}{RGB}{255,236,132}
\definecolor{lightgreen}{RGB}{161,239,10}
\definecolor{darkgreen}{RGB}{61,124,68}
\definecolor{lightblue}{RGB}{72,131,219}
\definecolor{darkblue}{RGB}{39,63,186}
\definecolor{plgreen}{RGB}{27,158,119}
\definecolor{plorange}{RGB}{217,95,2}
\definecolor{plpurple}{RGB}{117,112,179}
\definecolor{plpink}{RGB}{231,41,138}
\let\vec\bm    % makes vectors bold instead of over arrow
\newcommand{\koop}{\mathcal{K}}
\newcommand{\Real}{\mathbb{R}}
\newcommand{\xcon}{\tilde{\vec{x}}}   % for constant values
\newcommand{\ucon}{\tilde{\vec{u}}}   % for constant values
\newcommand{\uconi}{\tilde{u}}   % for constant values, not vectors
\newcommand{\xconi}{\tilde{x}}   % for constant values, not vectors
\newcommand{\xvec}{\vec{x}}   % for functions
\newcommand{\uvec}{\vec{u}}   % for functions
\newcommand{\pvec}{\vec{p}}   % for functions
\newcommand{\qvec}{\vec{q}}   % for functions
\newcommand{\psivec}{\vec{\psi}}   % for functions
\newcommand{\Fvec}{\vec{F}}
\newcommand{\snap}[1]{^{(#1)}}  % for index of snapshot
\newcommand{\zarg}{(t)}   % arguments of lifted state z
\definecolor{lightyellow}{RGB}{255,236,132}
\definecolor{lightgreen}{RGB}{161,239,10}
\definecolor{darkgreen}{RGB}{61,124,68}
\definecolor{lightblue}{RGB}{72,131,219}
\definecolor{darkblue}{RGB}{39,63,186}
\definecolor{plgreen}{RGB}{27,158,119}
\definecolor{plorange}{RGB}{217,95,2}
\definecolor{plpurple}{RGB}{117,112,179}
\definecolor{plpink}{RGB}{231,41,138}
\title{\LARGE \bf
Advantages of Bilinear Koopman Realizations for the\\Modeling and Control of Systems with Unknown Dynamics
}
\author{Daniel Bruder$^{1}$, %<-this % stops a space
        Xun Fu$^{1}$, %
        Ram Vasudevan$^{1}$%
% \thanks{Manuscript received: October, 15, 2020; Revised May, 29, 2018; Accepted July, 9, 2018.}    % use only for final RAL version
% \thanks{This paper was recommended for publication by Editor Yu Sun upon evaluation of the Associate Editor and Reviewers' comments.
% \thanks{*This material is based upon work supported by the National Science Foundation Graduate Research Fellowship Program under Grant No. 1256260 DGE. Any opinions, findings, and conclusions or recommendations expressed in this material are those of the author(s) and do not necessarily reflect the views of the National Science Foundation.}% <-this % stops a space
\thanks{$^{1}$ The authors are with the Mechanical Engineering Department at the 
        University of Michigan, Ann Arbor, MI 48109, USA
        {\tt\small \{bruderd, xunfu, ramv\}@umich.edu}}%
% \thanks{Digital Object Identifier (DOI): see top of this page.}
% \thanks{This work has been submitted to the IEEE for possible publication. Copyright may be transferred without notice, after which this version may no longer be accessible.}   % for archive version only
}
\begin{document}
\setlength{\textfloatsep}{8pt}% decrease vertical space after figures
\linespread{0.86}    % reduce spacing between lines
\setlength{\parskip}{0cm}   % remove extra spacing between paragraphs

\maketitle
\thispagestyle{empty}         % commented for final RAL version
\pagestyle{empty}             % commented for final RAL version

% % Paper headers (for final RAL submission)
% \markboth{IEEE Robotics and Automation Letters. Preprint Version. Accepted July, 2018}
% {Bruder \MakeLowercase{\textit{et al.}}: Force Generation by Parallel Combinations of Fluid-Driven Actuators}  % Use only for final RAL version

\begin{abstract}
% Nonlinear dynamical systems can be made easier to control by lifting them into the space of observable functions, where their evolution is described by the linear Koopman operator.
% %
% This paper describes how the Koopman operator can be used to generate linear, bilinear, and nonlinear model realizations
% % that are compatible with computationally efficient control methods.
% %
% and presents necessary and sufficient conditions for the existence of such realizations in the subspace spanned by a given set of observables.
% %
% These conditions are used to show that every control-affine system admits a valid bilinear realization over an infinite-set of observables, but does not necessarily admit a valid linear one.
% %
% Given this result, this paper argues that for a system with unknown dynamics, an approximate bilinear Koopman realization constructed from data is likely to yield better prediction and control results than a linear or nonlinear Koopman realization.
% %
% To illustrate this, several linear, bilinear, and nonlinear Koopman realizations of a simulated robot arm are constructed from data.
% %
% As expected, the bilinear Koopman realization simultaneously exceeds the prediction accuracy of a linear Koopman realization and the computational efficiency of a nonlinear Koopman realization.
% 
% 
%% Version 2 (adapted from intro)
Nonlinear dynamical systems can be made easier to control by lifting them into the space of observable functions, where their evolution is described by the linear Koopman operator.
This paper describes how the Koopman operator can be used to generate approximate linear, bilinear, and nonlinear model realizations from data, and argues in favor of bilinear realizations for characterizing systems with unknown dynamics.
Necessary and sufficient conditions for a dynamical system to have a valid linear or bilinear realization over a given set of observable functions are presented and used to show that every control-affine system admits an infinite-dimensional bilinear realization, but does not necessarily admit a linear one.
Therefore, approximate bilinear realizations constructed from generic sets of basis functions tend to improve as the number of basis functions increases, whereas approximate linear realizations may not.
To demonstrate the advantages of bilinear Koopman realizations for control, 
a linear, bilinear, and nonlinear Koopman model realization of a simulated robot arm are constructed from data. In a trajectory following task, the bilinear realization exceeds the prediction accuracy of the linear realization and the computational efficiency of the nonlinear realization when incorporated into a model predictive control framework.
\end{abstract}

% Keywords appear just beneath the abstract. Use only for final RAL version.
\begin{IEEEkeywords}
Model Learning for Control.
\end{IEEEkeywords}

\IEEEpeerreviewmaketitle

% \input{outline}

% Input sections here
\section{Introduction}  \label{sec:introduction}

%% Linear systems are easier control
Linear systems theory provides a wealth of analysis and control design techniques for linear dynamical systems,
but no such general framework exists for all nonlinear dynamical systems.
For this reason, it is common to approximate a nonlinear system with one or more linear systems to make it compatible with well-established and computationally efficient linear control methods.

Linearization of nonlinear systems can be achieved in several ways.
Local linearization techniques, such as Jacobian linearization, describe the local behavior near equilibrium points using the first order Taylor series of the dynamics \cite{astrom2010feedback}.
Such linearizations preserve the stability properties of equilibrium points, but only predict system behavior well in their vicinity.
% A set of local linearizations can be stitched together... \Dan{talk about local linearization with switching if suitable reference is found.}

Global linearization techniques, on the other hand, aim to convert a nonlinear dynamical system into an equivalent linear system.
One such technique, based on Koopman operator theory, achieves this by lifting the system into a higher-dimensional space of scalar-valued functions called \emph{observables}.
In this space, the evolution of observables along trajectories of the nonlinear system are described by the (linear) Koopman operator \cite{budivsic2012applied}.
The Koopman operator captures the behavior of the system everywhere, not just near equilibrium points, providing a global linear representation of the system in terms of observables.

%% Limitations of Koopman models/linear embeddings
 Despite their linearity in the space of observables, Koopman representations have limitations that hinder their ability to inform the control of arbitrary nonlinear dynamical systems.  
% but describes the flow observables rather than the flow of the state directly.
One limitation is that linearity with respect to observables does not imply linearity with respect to the control input.
Therefore, Koopman models are not necessarily compatible with computationally efficient linear control design techniques such as LQR \cite{anderson2007optimal} and linear MPC \cite{rawlings2009model}.
% Koopman models are generally infinite dimensional
A further limitation of Koopman linear embeddings is that they are in general infinite-dimensional.
% Koopman linear embeddings are globally valid, but they may also be infinite-dimensional.
Therefore, finite-dimensional truncations must be used in practice, which merely approximate the behavior of the original nonlinear system rather than capture it perfectly.

%% Problem with the Koopman linearizations used in other papers
Linearity with respect to the control input can be enforced by restricting the Koopman operator to a subspace of observables which only includes linear functions of the control input.
This strategy is introduced in \cite{korda2018linear} to construct linear predictors of nonlinear systems for model predictive control.
This and similar approaches have shown capable of achieving better control performance than local linearization techniques on a variety of different robots such as a Sphero SDK \cite{abraham2017model}, a swimming fish robot \cite{mamakoukas2019local}, a Rethink Sawyer robot arm \cite{abraham2019active}, and a several soft robots \cite{Vasudevan-RSS-19, bruder2020koopman}.

Such applications make use of finite-dimensional matrix approximations of the Koopman operator that are identified from data using a linear system identification technique known as Extended Dynamic Mode Decomposition (EDMD) \cite{williams2015data, mauroy2016linear, mauroy2017koopman}. 
These approximations rely upon the implicit assumption that the identified Koopman matrix converges to the true Koopman operator as its dimension goes to infinity.
In other words, they assume that sufficient accuracy can always be achieved by a Koopman model with linear control inputs if its dimension is large enough.
% In other words, with large enough dimension, any nonlinear dynamical system with control inputs can be converted to an LTI
However, as shown in \cite{bakker2019koopman}, a valid Koopman representation is not guaranteed to exist when the approximation is restricted to a subspace that excludes nonlinear functions of the input.
% In fact, we will show that that necessary conditions for such a representation to exist are prohibitively restrictive for most systems.
This holds true even if the subspace is infinite-dimensional.

There is thus a trade-off when looking to utilize Koopman representations for control of a system with unknown dynamics.
Koopman representations with linear control inputs are advantageous because they are compatible with computationally efficient linear control techniques.
However, a valid Koopman representation of this form may not exist, in which case finite-dimensional approximations of this type are unlikely to improve with dimension.
% rendering approximations of this form have no basis for 
On the other hand, every system admits a valid Koopman representation with nonlinear control inputs.
However, such representations have limited usefulness for control applications because they are not compatible with efficient control techniques.

%% Bilinear models split the difference
Koopman representations with bilinear control input terms strike a compromise between these two extremes.
% Models of this type arise by considering the restriction of the Koopman operator on subspaces spanned by observables that are linear with respect to the control input when the state is held constant. \Dan{Hard to describe bilinear without notation.}
Such models preserve some of the computational benefits of linear Koopman models, while being more likely to exist for arbitrary dynamical systems.
This makes them desirable for control applications.

%% Goswami paper
Previous work has investigated Koopman-based bilinearization and control methods for nonlinear systems.
% with known dynamics.
In \cite{goswami2020global}, the Koopman Canonical Transform (KCT) is used to describe how to construct bilinear realizations over the set of Koopman eigenfunctions.
It also introduces a set of sufficient conditions for a nonlinear system to be bilinearizable.
% In \cite{peitz2020data}, a bilinear Koopman model is converted to a set of switched linear systems and incorporated into a model predictive control framework.
In \cite{peitz2020data}, it is shown that the control-affine property of dynamical systems is preserved for the continuous Koopman operator, which allows bilinear surrogate models to be constructed by interpolating between different operators.
These bilinear models are incorporated into a model predictive control framework and applied to the control of several nonlinear systems.
% This allows low-dimensional bilinear models to be used to control high-dimensional nonlinear systems.

%% How we differ from previous work
% In contrast to previous work, 
In line with previous work,
this paper explores the benefits of using the Koopman operator to build bilinear models of systems with unknown dynamics.
% this paper focuses on using the Koopman operator to build models of systems with unknown dynamics.
While approximate Koopman model realizations with linear, bilinear, or nonlinear control inputs can be directly constructed from data, 
% that has been lifted into a space spanned by a user-defined set of generic basis functions,
% Without prior knowledge of the vector field or Koopman eigenfunctions, this set of observables consists of generic basis functions.
% We argue that for Koopman realizations constructed using generic basis functions, those with bilinear control inputs are likely to yield better predictions than those with linear control inputs.
% We argue that bilinear Koopman realizations constructed using generic basis functions are likely to to yield better predictions than linear Koopman realizations constructed using generic basis functions,
we argue that bilinear Koopman realizations identified in this manner are likely to yield better predictions than linear ones, and be more computationally efficient than nonlinear ones.

%% Contributions
The contributions of this paper are twofold.
First, we offer a theoretical justification for the proposed advantages of bilinear Koopman model realizations.
We specify necessary and sufficient conditions for a dynamical system to have a valid linear or bilinear realization over a given set of observables, and use these conditions to show show that every control-affine system admits a bilinear realization, but does not necessarily admit a linear one.
Second, we provide instructions on how to construct approximate linear, bilinear, and nonlinear Koopman model realizations from data.
Using this approach, we construct several Koopman model realizations of a simulated robot arm, and show that a bilinear realization simultaneously exceeds the prediction accuracy of a linear realization and the computational efficiency of a nonlinear realization when incorporated into a model predictive control framework.

The paper is organized as follows.
In Section \ref{sec:theory}, we present some preliminaries of Koopman operator theory as well as the necessary and sufficient conditions for linear and bilinear Koopman model realizations to exist.
In Section \ref{sec:methods}, we describe the process for identifying approximate Koopman model realizations from data as well as how they can be incorporated into a model predictive control framework.
In Section \ref{sec:experiments}, we present modeling and control comparisons for several linear, bilinear, and nonlinear Koopman model realizations of a simulated planar robot arm.
In Section \ref{sec:discussion}, we discuss the results of these experiments and offer some concluding remarks.
% in Section \ref{sec:conclusion}.

\section{Theory}    \label{sec:theory}

This section describes the Koopman operator and how it relates to model realizations.
A definition for linear and bilinear realizations is presented, as well as necessary and sufficient conditions for a realization to be of one of these types.
Unpacking these conditions reveals that every control-affine system has a (possibly infinite-dimensional) bilinear realization, but not necessarily a linear one.

%%%%%%%%%%%%%%%%%%%%%%%%%%%%%%%%%%%%%%%%%%%%%%%%%%%%%%
% Koopman operator preliminaries
%%%%%%%%%%%%%%%%%%%%%%%%%%%%%%%%%%%%%%%%%%%%%%%%%%%%%%
\subsection{Koopman operator preliminaries}  \label{sec:preliminaries}

Consider the (nonlinear) dynamical system governed by the following differential equation
\begin{align}
    \dot{\xvec}(t) &= \Fvec ( \xvec(t) , \uvec(t) )
    % \dot{\xvec}(t) &= \Fvec_x (\xvec(t)) + \Fvec_{xu} ( \xvec(t) , \uvec(t) )
    % y(t) &= g( x(t) )
    \label{eq:nlsys}
\end{align}
where $\xvec(t) = [x_1(t) , \ldots , x_n(t) ]^\top \in X \subset \Real^n$ is the state and $\uvec(t) = [u_1(t) , \ldots , u_m(t) ]^\top \in U \subset \Real^m$ is the input of the system at time $t \in [0 , +\infty)$, $\Fvec$ is a continuously differentiable function, and $X, U$ are compact subsets.
Denote by $\phi_\tau : X \times U \to X \times U$ the \emph{flow map}, where $\phi_\tau ( \xvec(0) , \uvec(0) ) = ( \xvec(\tau) , \uvec(0) )$ and $\xvec(\tau)$ is the solution to \eqref{eq:nlsys} at time $\tau$ when beginning with the initial condition $\xvec(0)$ at time $0$ and a constant input $\uvec(0)$ applied for all time between $0$ and $\tau$.

%% System representation in the space of observables
Let $\mathcal{F}$ be the infinite-dimensional function space composed of all square-integrable real-valued functions with compact domain $X \times U \subset \Real^{n \times m}$.
Elements of $\mathcal{F}$ are called \emph{observables}.
In $\mathcal{F}$, the flow of the system is characterized by the set of Koopman operators $\koop_\tau : \mathcal{F} \to \mathcal{F}$, for each $\tau \geq 0$,
which describe the evolution of every observable ${f \in \mathcal{F}}$ along the trajectories of the system according to the following definition:
\begin{align}
    \koop_\tau f = f \circ \phi_\tau,      
    \label{eq:koopman}
\end{align}
where $\circ$ indicates function composition such that
\begin{align}
    ( \koop_\tau  f) (\xvec(t),\ucon ) 
    % &= f \left( \phi_\tau( \xvec(t),\ucon ) \right) 
    &= f( \xvec(t+\tau), \ucon ) .
    \label{eq:koopman-applied}
\end{align}
for a constant input $\ucon$ over the time interval $[t,t+\tau]$.

The set of Koopman operators is generated by an infinitesimal generator, denoted $\koop$, according to the following relation \cite[Chapter 11]{higham2008functions},
\begin{align}
    \koop_\tau &= e^{\tau \koop}
    \label{eq:cont-2-dis}
\end{align}
The infinitesimal generator describes the time derivative of observables along trajectories of the system
\begin{align}
    \koop f &= \frac{d}{dt} f
    \label{eq:generator}
\end{align}
such that
\begin{align}
    ( \koop f ) (\xvec(t) , \uvec(t)) &= \frac{\partial f}{\partial \xvec} \Fvec (\xvec(t) , \uvec(t) )
    \label{eq:generator-applied}
\end{align}
For a given time-step, $\koop_\tau$ is sometimes referred to as the \emph{discrete time} Koopman operator and $\koop$ is referred to as the \emph{continuous time} Koopman operator \cite{brunton2016koopman}.

%%%%%%%%%%%%%%%%%%%%%%%%%%%%%%%%%%%%%%%%%%%%%%%%%%%%%%
% Model realizations
%%%%%%%%%%%%%%%%%%%%%%%%%%%%%%%%%%%%%%%%%%%%%%%%%%%%%%
\subsection{Model Realizations} \label{sec:realizations}

A model realization of \eqref{eq:nlsys} is a dynamical system that generates the same state response $\xvec$ as \eqref{eq:nlsys} under any input signal $\uvec$.
The Koopman operator can be used to generate model realizations.
This is done by choosing a countable set of observables $\{ f_i \in \mathcal{F} \}_{i=1}^N$ (where $N \in \mathbb{N} \cup \infty$) from which the original state can be recovered through an inverse mapping $C : \Real^N \to \Real^n$.
% \Xun{I couldn't finds where the \mathb{N} is defined}
Taken together, this set of observables constitutes the \emph{lifted state} of the realization. 
The evolution of each component of the lifted state is described by applying the continuous Koopman operator, and the output equation of the realization consists of the mapping $C$:
\begin{align}
    \frac{d}{dt} f_i (t) &= \koop f_i (t)
    \hspace{30pt} \text{for } i = 1,\ldots,N \\
    % \vdots \\
    % \frac{d}{dt} z_N (t) &= \koop z_N (t) \\
    \xvec(t) &= C \left( f_1 (t) , \ldots , f_N (t) \right)
    \label{eq:realization}
\end{align}
where $f_i (t)$ is shorthand for $f_i (\xvec(t),\uvec(t))$.
We say that the realization is defined over the set of observables $\{ f_i \}_{i=1}^N$.
Note that model realizations are not unique since they depend on the choice of observables.

%% Linear and bilinear realizations are desired
The Koopman operator is a linear operator, therefore a realization generated using the Koopman operator is linear with respect to the observables over which it is defined.
However, because these observables can be nonlinear functions, such realizations are not necessarily linear with respect to the original state and input.
Thus, they may not present any advantages in terms of computational efficiency.

% Certain types of realizations do have structure that is compatible with convex optimal control methods, making them particularly attractive for control design.
Certain types of realizations are particularly amenable to control design since the control input appears either linearly or bilinearly in them. 
% achieve gains in computational efficiency because their structure is compatible with convex optimal control methods.
% This makes them particularly attractive for control design.
% Certain types of realizations are particularly attractive for control design because they are compatible with convex optimal control methods.
% Realizations that are linear with respect to the input are particularly attractive while doing control design because such realizations are compatible with convex optimal control methods.
% We define two types of realizations that satisfy this property, linear and bilinear \Ram{a bilinear realization isn't linear ...}
% \Dan{How is the new language here? This is at least true for bilinear models over an MPC horizon of one time step...}.
%% Subspaces
Before formally defining these realizations,
we first define the following subsets of the space of all observables $\mathcal{F}$ for notational convenience:
\begin{align}
    \mathcal{X} &:= \left\{ f \in \mathcal{F} \,\vert\, f(\xcon,\ucon) = \tilde{x}_i \text{ for some } i = 1,...,n \right\} \\
    \mathcal{U} &:= \left\{ f \in \mathcal{F} \,\vert\, f(\xcon,\ucon) = \tilde{u}_i \text{ for some } i = 1,...,m \right\} \\
    \mathcal{Z} &:= \left\{ f \in \mathcal{F} \,\vert\, f(\xcon,\ucon_1) = f(\xcon,\ucon_2)\,,\, \forall \ucon_1,\ucon_2 \in \Real^m \right\}
    % \mathcal{Z} &:= \left\{ f \in \mathcal{F} \,\vert\, f( \xvec , \uvec ) = f'(\xvec), f' : X \to \Real \right\}
\end{align}
where $\xcon = [\tilde{x}_1 , \ldots , \tilde{x}_n] \in X$ and $\ucon = [\tilde{u}_1 , \ldots , \tilde{u}_m]\in U$.
% \Ram{sort of confused by the definition of $\mathcal{X}$ and $\mathcal{U}$. Are you trying to say that $\exists$ an $i$ such that $f(x,u) = x_i$? or $u_i$ depending upon the definition? I'm not sure why you need an $f_i$ then... }
In other words, $\mathcal{X}$ is the set of functions that project onto components of the state, $\mathcal{U}$ is the set of functions that project onto components of the input, and $\mathcal{Z}$ is the set of all functions that depend on the state only, including constant functions.
With this notation in hand, we define linear and bilinear model realizations as follows:
% In other words, $\mathcal{X}$ is the space spanned by indicator functions on components of the state, $\mathcal{U}$ is the space space spanned by indicator functions on components of the input, and $\mathcal{Z}$ is the space of all functions that depend on the state only, including constant functions.
% With this notation in hand, we define linear and bilinear model realizations as follows.

%% Linear and Bilinear Realizations
\begin{definition}[Linear and bilinear model realizations]
A model realization of $\dot{\xvec}(t) = \Fvec ( \xvec(t) , \uvec(t) )$ over the set of observables ${ \{ z_i \in \mathcal{Z} \}_{i=1}^N }$ is \textbf{bilinear} if there exist sets of coefficients 
${ \{ a_{ij} \in \Real \}_{i=1,j=1}^{N,N} }$, 
${ \{ b_{ij} \in \Real \}_{i=1,j=1}^{N,m} }$, 
${ \{ h_{ijk} \in \Real \}_{i=1,j=1,k=1}^{N,N,m} }$, 
and ${ \{ c_{ij} \in \Real \}_{i=1,j=1}^{n,N} }$
such that
\begin{equation}
\begin{aligned}
    \frac{d}{dt} z_i \zarg 
    = &\sum_{j=1}^N a_{ij} z_j \zarg
    \hspace*{-1pt} + \hspace*{-1pt} \sum_{j=1}^m b_{ij} u_j (t)
    \hspace*{-1pt} + \hspace*{-1pt} \sum_{j=1}^m \sum_{k=1}^N h_{ijk} z_{k} \zarg u_{j}(t)   % && \text{for } i = 1,\ldots,N
    \label{eq:realization-dynamics}
\end{aligned}
\end{equation}
for $i = 1 , \ldots , N$ and
\begin{align}
    x_i(t) &= \sum_{j=1}^N c_{ij} z_j \zarg
    % && \text{for } i = 1,\ldots,n
    \label{eq:realization-output}
\end{align}
for $i = 1 , \ldots , n$ where $z_i \zarg$ is shorthand for $z_i (\xvec(t),\uvec(t))$, $\xvec(t) = [x_1(t) , \ldots , x_n(t) ]^\top$, and $\uvec(t) = [u_1(t) , \ldots , u_m(t) ]^\top $.
If ${ h_{ijk} = 0 }$ for all $i,j,k$, then the realization is said to be \textbf{linear}.
\label{def:realizations}
\end{definition}
\noindent The lifted state of these realizations consists entirely of observables that depend on the state only, and the input appears only in linear and bilinear terms.

% The computational advantage of linear and bilinear models comes from the way the input is incorporated.
% Because the lifted state depends on state dependent observables only, the input appears only in linear and bilinear terms.
% Linear realizations are compatible with convex optimal control methods, and bilinear realizations are compatible with suboptimal, but effective, optimal control methods as will be shown later in this paper.

%% When do we have a linear or bilinear model
As stated previously, realizations are not unique.
They depend on the choice of observables over which they are defined.
For a particular choice of observables to yield a bilinear or linear realization they must satisfy certain conditions.
These conditions are presented in the following theorem.

\begin{theorem}[Necessary and sufficient conditions for linear and bilinear realizations] \label{thm:realizations}

The realization of the system governed by \eqref{eq:nlsys} over a set of observables ${ \bar{\mathcal{Z}} = \{ z_i \in \mathcal{Z} \}_{i=1}^N }$ is:
\begin{enumerate}
    \item \textbf{Bilinear} if and only if
    \begin{align}
        \frac{\partial z_i}{\partial \xvec} \Fvec \in \text{span}\left( \bar{\mathcal{Z}} \cup \mathcal{U} \cup \{ f \cdot g \vert f \in \bar{\mathcal{Z}} , g \in \mathcal{U} \} \right)
        \label{eq:cond-bilinear}
    \end{align}
    for $i=1,\ldots,N$
    and $\mathcal{X} \subset \text{span}(\bar{\mathcal{Z}})$.
    \item \textbf{Linear} if an only if
    \begin{align}
        \frac{\partial z_i}{\partial \xvec} \Fvec \in \text{span}\left( \bar{\mathcal{Z}} \cup \mathcal{U} \right)
        \label{eq:cond-linear}
    \end{align}
    for $i=1,\ldots,N$
    and $\mathcal{X} \subset \text{span}(\bar{\mathcal{Z}})$.
\end{enumerate}
Note that linear systems are by definition also bilinear.
\end{theorem}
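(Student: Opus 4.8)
The plan is to show that each span condition is a coordinate-free restatement of the existence of the coefficients in Definition~\ref{def:realizations}, so that the proof reduces to matching the two descriptions term by term. The starting point is to rewrite the realization dynamics through the generator: since every $z_i \in \mathcal{Z}$ depends on the state alone, combining the defining relation $\frac{d}{dt} z_i\zarg = \koop z_i\zarg$ with \eqref{eq:generator-applied} yields
\begin{align}
    \frac{d}{dt} z_i\zarg = \frac{\partial z_i}{\partial \xvec}\Fvec(\xvec(t),\uvec(t)) .
\end{align}
Because a trajectory may be initialized at any $(\xcon,\ucon)\in X\times U$ with $\ucon$ held constant, this equality holds pointwise on all of $X\times U$; hence $\frac{\partial z_i}{\partial \xvec}\Fvec$ is a well-defined observable in $\mathcal{F}$, and every subsequent identity can be read as an identity of functions on $X\times U$.

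For the bilinear claim I would prove both implications at once by identifying the right-hand side of \eqref{eq:realization-dynamics} with a generic element of the span in \eqref{eq:cond-bilinear}. The terms $\sum_j a_{ij} z_j$ range exactly over $\text{span}(\bar{\mathcal{Z}})$, the terms $\sum_j b_{ij} u_j$ range exactly over $\text{span}(\mathcal{U})$, and the terms $\sum_{j,k} h_{ijk} z_k u_j$ range exactly over $\text{span}(\{ f\cdot g \mid f\in\bar{\mathcal{Z}}, g\in\mathcal{U}\})$. Consequently, coefficients $a_{ij}, b_{ij}, h_{ijk}$ realizing \eqref{eq:realization-dynamics} exist if and only if $\frac{\partial z_i}{\partial \xvec}\Fvec$ lies in the combined span. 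Separately, the output equation \eqref{eq:realization-output} asks that every coordinate projection $x_i$ be a linear combination of the $z_j$; since these projections are exactly the elements of $\mathcal{X}$, this is precisely the requirement $\mathcal{X}\subset\text{span}(\bar{\mathcal{Z}})$. Together these give the ``if and only if'' for bilinearity.

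For the linear claim I would specialize the argument by forcing $h_{ijk}=0$, which removes the product set $\{f\cdot g\}$ and reduces the condition to \eqref{eq:cond-linear}, leaving the output requirement unchanged. Since setting the bilinear coefficients to zero is exactly the defining restriction of a linear realization in Definition~\ref{def:realizations}, every linear realization is also bilinear, which additionally justifies the closing remark.

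I expect the step requiring the most care to be the passage from the along-trajectory identity to a genuine pointwise identity on $X\times U$, together with the bookkeeping when $N=\infty$, where the sums over $j$ and $k$ become (convergent) series and ``span'' must be interpreted accordingly. One must also check that $\text{span}(\mathcal{U})$ supplies only terms linear in $\uvec$ and the products only terms bilinear in $(\zvec,\uvec)$, so that neither side of the equivalence can acquire spurious higher-order-in-$\uvec$ contributions.
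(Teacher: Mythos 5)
Your proposal is correct and follows essentially the same route as the paper: apply the chain rule (equivalently, the generator identity \eqref{eq:generator-applied}) to get $\frac{d}{dt}z_i = \frac{\partial z_i}{\partial \xvec}\Fvec$, then observe that the existence of the coefficients $a_{ij}, b_{ij}, h_{ijk}$ in Definition~\ref{def:realizations} is precisely membership of $\frac{\partial z_i}{\partial \xvec}\Fvec$ in the corresponding span, and that the output equation is precisely $\mathcal{X}\subset\text{span}(\bar{\mathcal{Z}})$. Your added remarks on the pointwise validity of the identity on $X\times U$ and on interpreting the span when $N=\infty$ are reasonable refinements that the paper's proof leaves implicit.
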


\begin{proof}
The definition for both realizations specifies that the state can be recovered as a linear combination of the observables, i.e.
\begin{align}
    x_i &= \sum_{j=1}^N c_{ij} z_j,
    &&\text{for } i = 1,\ldots,n \\
    \Leftrightarrow
    x_i &\in \text{span}\left( \bar{\mathcal{Z}} \right), &&\text{for } i = 1,\ldots,n \\
    \Leftrightarrow
    \mathcal{X} &\subset \text{span}( \bar{\mathcal{Z}} ).
\end{align}
% Thus, $\mathcal{X} &\subset \text{span}\{ \bar{\mathcal{Z}} \}$ is a necessary condition for a realization to be bilinear/linear.
%
The conditions specified by \eqref{eq:cond-bilinear} and \eqref{eq:cond-linear} are verified by applying the chain rule to the left hand side of \eqref{eq:realization-dynamics}:
\begin{align}
    \frac{d}{dt} z_i &= \frac{\partial z_i}{\partial \xvec} \frac{d \xvec}{dt} = \frac{\partial z_i}{\partial \xvec} \Fvec,
    &&&&\text{for } i = 1,\ldots,N
    \label{eq:chain-rule}
\end{align}
Plugging \eqref{eq:chain-rule} into the definition of a bilinear realization from Def.~\ref{def:realizations} then yields,
\begin{align}
    \frac{\partial z_i}{\partial \xvec} \Fvec 
    &= \sum_{j=1}^N a_{ij} z_j + \sum_{j=1}^m b_{ij} u_j 
    + \sum_{j=1}^m \sum_{k=1}^N h_{ijk} z_{k} u_{j} \\
    \Leftrightarrow 
    \frac{\partial z_i}{\partial \xvec} \Fvec 
    &\in \text{span}\left( \bar{\mathcal{Z}} \cup \mathcal{U} \cup \{ f \cdot g \vert f \in \bar{\mathcal{Z}} , g \in \mathcal{U} \} \right)
\end{align}
Similarly, plugging \eqref{eq:chain-rule} into the definition of a linear realization from Def.~\ref{def:realizations} yields,
\begin{align}
    \frac{\partial z_i}{\partial \xvec} \Fvec 
    &= \sum_{j=1}^N a_{ij} z_j + \sum_{j=1}^m b_{ij} u_j \\
    % + \sum_{j=1}^m \sum_{k=1}^N h_{ijk} z_{k} u_{j} \\
    \Leftrightarrow
    \frac{\partial z_i}{\partial \xvec} \Fvec &\in \text{span}( \bar{\mathcal{Z}} \cup \mathcal{U} )
\end{align}
% $\iff$ $\frac{\partial z_i}{\partial \xvec} \Fvec \in \text{span}\{ \bar{\mathcal{Z}} \cup \mathcal{U} \}$

\end{proof}

% %% Systems must be control-affine
% As we show below, one can verify that the conditions in Theorem~\ref{thm:realizations} can only be satisfied if the dynamics of the system are control-affine. 
% Therefore, this can be considered an additional necessary condition for a system to admit a linear or bilinear realization, regardless of the choice of observables in $\bar{\mathcal{Z}}$.

% %% Conditions can't be verified for unknown systems
% Since there are computational benefits to linear and bilinear realizations, we would prefer to choose sets of observables that admit a realization of one of these types.
% However, unless the dynamics of the system are known, it is not possible to verify whether a particular set of observables satisfies the conditions outlined in Theorem \ref{thm:realizations}.
% %% Why the theorem is still useful
% % Despite this fact, 
% Still, the theorem does provide insights that apply to systems with unknown dynamics by way of a corollary.
% % This corollary asserts that every control-affine system admits a bilinear realization over an infinite set of observables.
% This corollary asserts that every realization defined over a bases of the set of all state-dependent observables $\mathcal{Z}$ is bilinear, regardless of the underlying dynamics of the system. 

Since there are computational benefits to performing control synthesis with linear and bilinear realizations, we would prefer to choose sets of observables that admit a realization of one of these types.
However, unless the dynamics of the system are known, it is not possible to verify whether a particular set of observables satisfies the conditions outlined in Theorem \ref{thm:realizations}.
Nevertheless, as we show below, one can prove that for control-affine systems, a realization over $\mathcal{Z}$ is bilinear.

% that the conditions in Theorem~\ref{thm:realizations} can only be satisfied if the dynamics of the system are control-affine. 

% Even though the conditions of Theorem~\ref{thm:realizations} cannot be verified for a system with unknown dynamics, a corollary to the theorem asserts that any control-affine system admits a bilinear realization over an infinite set of observables.

%% Corollary: Control-affine systems have bilinear realizations
%% Existence of a bilinear realization
\begin{corollary}
If the system governed by \eqref{eq:nlsys} is control-affine and a set of observables $\bar{\mathcal{Z}} = \{ z_i \in \mathcal{Z} \}_{i=1}^\infty$ is a basis of $\mathcal{Z}$, then the realization of the system defined over $\bar{\mathcal{Z}}$ is bilinear.
\end{corollary}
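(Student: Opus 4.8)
The plan is to invoke Theorem~\ref{thm:realizations} and verify its two conditions for bilinearity directly, using only the control-affine structure of $\Fvec$ and the fact that $\bar{\mathcal{Z}}$ is a basis of $\mathcal{Z}$. The containment $\mathcal{X} \subset \text{span}(\bar{\mathcal{Z}})$ is immediate: every element of $\mathcal{X}$ is a coordinate projection $(\xcon,\ucon) \mapsto \tilde{x}_i$, which depends on the state alone and hence lies in $\mathcal{Z}$; since $\bar{\mathcal{Z}}$ is a basis of $\mathcal{Z}$, any such function lies in $\text{span}(\bar{\mathcal{Z}})$. The real work is therefore to establish the spanning condition \eqref{eq:cond-bilinear} on $\frac{\partial z_i}{\partial \xvec} \Fvec$ for every $z_i \in \bar{\mathcal{Z}}$.

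The key step is to exploit control-affineness. I would write $\Fvec(\xvec,\uvec) = \vec{f}_0(\xvec) + \sum_{j=1}^m \vec{f}_j(\xvec)\, u_j$, so that for each basis observable $z_i \in \mathcal{Z}$ — whose gradient $\frac{\partial z_i}{\partial \xvec}$ is itself a function of the state only — the chain-rule expression splits as
\begin{align}
    \frac{\partial z_i}{\partial \xvec} \Fvec
    = \frac{\partial z_i}{\partial \xvec} \vec{f}_0(\xvec)
    + \sum_{j=1}^m \left( \frac{\partial z_i}{\partial \xvec} \vec{f}_j(\xvec) \right) u_j .
\end{align}
Both the drift coefficient $\frac{\partial z_i}{\partial \xvec} \vec{f}_0$ and each input coefficient $\frac{\partial z_i}{\partial \xvec} \vec{f}_j$ depend on the state only, so each belongs to $\mathcal{Z} = \text{span}(\bar{\mathcal{Z}})$. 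Expanding the drift coefficient in the basis places the first term in $\text{span}(\bar{\mathcal{Z}})$; expanding each input coefficient as $\sum_k \beta_{ijk} z_k$ and multiplying by $u_j$ places every product $z_k u_j$ in $\{ f \cdot g \mid f \in \bar{\mathcal{Z}},\, g \in \mathcal{U} \}$, since $u_j \in \mathcal{U}$. Summing, $\frac{\partial z_i}{\partial \xvec} \Fvec$ lands in the span on the right-hand side of \eqref{eq:cond-bilinear}, with the expansion coefficients $\beta_{ijk}$ supplying exactly the bilinear coefficients $h_{ijk}$ of Def.~\ref{def:realizations}. Both conditions of Theorem~\ref{thm:realizations} then hold, and the realization is bilinear.

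I expect the main obstacle to be the functional-analytic bookkeeping rather than the algebra. First, one must confirm that the state-only coefficients $\frac{\partial z_i}{\partial \xvec} \vec{f}_j$ genuinely lie in $\mathcal{Z}$, i.e. that they are square-integrable on the compact set $X \times U$; this follows from continuity of $\Fvec$ and the $z_i$ together with compactness, but it deserves an explicit remark. Second, because $\bar{\mathcal{Z}}$ is an \emph{infinite} basis, the coefficient expansions $\sum_k \beta_{ijk} z_k$ are in general infinite series, so the membership "$\in \text{span}(\bar{\mathcal{Z}})$" and the identity $\mathcal{Z} = \text{span}(\bar{\mathcal{Z}})$ must be read in the closed-span sense appropriate to an $L^2$ basis. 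The argument is consistent provided the $\text{span}$ in \eqref{eq:cond-bilinear} is interpreted in the same (closed) sense, which is also what the $N = \infty$ case of Def.~\ref{def:realizations} permits; flagging this reading is the one place where the proof needs care.
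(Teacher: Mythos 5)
Your proposal is correct and follows essentially the same route as the paper's own proof: decompose $\Fvec$ using control-affineness, observe that $\frac{\partial z_i}{\partial \xvec}\Fvec_x$ and $\frac{\partial z_i}{\partial \xvec}\Fvec_u^j$ are state-only functions lying in $\mathcal{Z} = \text{span}(\bar{\mathcal{Z}})$, and conclude that the input terms land in the span of products $\{ f \cdot g \mid f \in \bar{\mathcal{Z}},\, g \in \mathcal{U} \}$ so that Theorem~\ref{thm:realizations} applies. Your added remarks on square-integrability and on reading ``span'' in the closed, $L^2$ sense for the infinite basis are careful points the paper leaves implicit, but they do not change the argument.
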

\begin{proof}
The system being control-affine implies that there exists $\Fvec_x : X \to \Real^n$ and $\{ \Fvec_u : X \to \Real^n \}_{i=1}^m$ such that
\begin{align}
    \Fvec( \xvec , \uvec ) &= \Fvec_x (\xvec) + \sum_{j=1}^m \Fvec_u^j (\xvec) u_j
\end{align}
Consider a realization generated by the Koopman operateor as in \eqref{eq:realization}, then the function $\frac{\partial z_i}{\partial \xvec}$ maps from $X$ to $\Real^{1 \times n}$.
Thus, the products of $\frac{\partial z_i}{\partial \xvec}$ with $\Fvec_x$ and $\Fvec_u^j$ map from $X$ to $\Real$, i.e. they are in $\mathcal{Z}$.
Given that $\bar{\mathcal{Z}}$ is a basis for $\mathcal{Z}$, it follows that
\begin{align}
    \frac{\partial z_i}{\partial \xvec} \Fvec_x \in \text{span}\{ \bar{\mathcal{Z}} \}, &&\forall i \\
    \frac{\partial z_i}{\partial \xvec} \Fvec_u^j \in \text{span}\{ \bar{\mathcal{Z}} \}, &&\forall i,j
\end{align}
This implies that for some set of coefficients $\{ \gamma_{ij} \}_{i=1,j=1}^{\infty,m}$,
\begin{align}
    \frac{\partial z_i}{\partial \xvec} \Fvec_u^j &= \sum_{i=1}^\infty \gamma_{ij} z_i, &&\forall j \\
    \implies \frac{\partial z_i}{\partial \xvec} \Fvec_u^j u_j &= \sum_{i=1}^\infty \gamma_{ij} z_i u_j, &&\forall j \\
    \implies \frac{\partial z_i}{\partial \xvec} \Fvec_u^j u_j &\in \text{span}\left( 
    \{ f \cdot g \vert f \in \bar{\mathcal{Z}} , g \in \mathcal{U} \}
    \right), &&\forall j
\end{align}
Thus, \eqref{eq:cond-bilinear} is satisfied.
Additionally, the condition that ${ \mathcal{X} \subset \text{span}(\bar{\mathcal{Z}}) }$ is satisfied since ${ \mathcal{X} \subset \mathcal{Z} }$ and ${ \mathcal{Z} = \text{span}(\bar{\mathcal{Z}}) }$.
\end{proof}

%% Unpack the consequences of the theorem
A consequence of the preceding corollary is that every control-affine system has a valid bilinear realization over an infinite set of basis functions.
Thus, generic sets of basis functions (i.e. polynomial, Fourier, or radial) can be used as the chosen set of observables in $\bar{\mathcal{Z}}$ since linear combinations of them can represent arbitrary functions in $\mathcal{Z}$.
It should be noted that the theorem offers no such guarantee that a system
% \Xun {maybe control-affine system?}
has a valid linear realization, no matter the choice of basis functions.
% \Ram{proof? or citation?}.
Therefore, realizations defined over a finite set of basis functions of the space of state-dependent observables
% basis functions of the set of all state-depended observables
will converge to a valid bilinear realization as the number of basis functions goes to infinity, but will not necessarily converge to a valid linear realization.

% In other words, if the dynamics of the system are control-affine, then there exists a (possibly infinte dimensional) bilinear realization over set of all observables that depend on the state only.
% It should be noted, that there is no such guarantee that a linear realization exists, even over an infinite set of observables.

% \Dan{Include example here $\dot{x} = x^2 u$ if space permits. Space will almost certainly not permit though.}
\section{Techniques for Data-Driven Realization}   \label{sec:methods}

The previous section describes how a model realization can be constructed over a set of observables using the Koopman operator.
This section describes how to identify matrix approximations of the Koopman operator from data such that they can be used to produce model realizations that are linear, bilinear, or nonlinear.

\subsection{Approximation of the Koopman operator from data}
\label{sec:koopid}

The restriction of the Koopman operator to a finite-dimensional subspace can be represented as a matrix.
Using the Extended Dynamic Mode Decomposition (EDMD) algorithm \cite{williams2015data, mauroy2016linear, mauroy2017koopman}, we identify a finite-dimensional matrix approximation of the Koopman operator via linear regression applied to observed data.

We first specify a finite-dimensional subspace as the span of a chosen set of $M$ linearly independent observables ${ \{ \psi_i : X \times U \to \Real \}_{i=1}^M}$.
% The span of these observables form the finite-dimensional subspace the Koopman operator approximation will be restricted to.
We then define a \emph{lifting function} ${\psivec : X \times U \to \Real^M}$ which evaluates each of the observables and stacks them into a vector:
\begin{align}
    % \psi(x) &:= \begin{bmatrix} \psi_1 (x) & \cdots & \psi_N (x) \end{bmatrix}^\top.
    \psi( \xcon , \ucon ) &:= \begin{bmatrix} \psi_1( \xcon , \ucon ) & \cdots & \psi_M ( \xcon , \ucon ) \end{bmatrix}^\top,
    \label{eq:lift}
\end{align}
where $\xcon \in X$ and $\ucon \in U$.

%% How this is done on our system
To approximate the discrete time Koopman operator from a set of experimental data, we take $K$ discrete measurements in the form of so-called ``snapshots'' $\{ \pvec\snap{k} , \qvec\snap{k}, \uvec\snap{k} \}_{k=1}^{K}$ where
% with sampling period $T_s$. We separate the data into a set of $K$ so-called ``snapshot pairs'' of the form ${ \{ a_k , b_k \} \in \Real^{n \times 2} }$ where
\begin{align}
    \pvec\snap{k} &:=  \xvec ( t\snap{k} ) \\
    \qvec\snap{k} &:= \xvec ( t\snap{k} + T_s ),
    % a\snap{k} &:=  \ycon\snap{k} \\
    % b\snap{k} &:= \phi_{T_s} (\ycon\snap{k} , \ucon\snap{k}),
    \label{eq:ab}
\end{align}
$t\snap{k}$ denotes the time corresponding to the $k^\text{th}$ measurement, $\uvec\snap{k}$ is a constant input applied between $\pvec\snap{k}$ and $\qvec\snap{k}$, and $T_s$ is the sampling period, which is assumed to be identical for all snapshots.
Note that consecutive snapshots do not have to be generated by consecutive measurements. 

By definition, the action of the discrete time Koopman operator ${\koop}_{T_s}$ advances the value of observables one time-step.
Therefore, the best matrix approximation of it in the least-squares sense, denoted $\bar{\koop}_{T_s}$, is the minimizer to
\begin{align}
    % \bar{\koop}_{T_s} &=
    \underset{\check{\koop}}{\min} 
    \sum_{k=1}^K
    \left\lVert 
    {\check{\koop}}^\top \psivec( \pvec\snap{k} , \uvec\snap{k} ) - \psivec( \qvec\snap{k} , \uvec\snap{k} )
    \right\rVert_2^2.
    \label{eq:Kbar}
\end{align}

%% Continous time-Koopman operator
With an approximation of the discrete time Koopman operator $\bar{\koop}_{T_s}$ in hand, we can solve for the corresponding continuous time Koopman operator $\bar{\koop}$ by inverting \eqref{eq:cont-2-dis}:
\begin{align}
    \bar\koop &= \frac{1}{T_s} \log{ \koop_{T_s} } 
    \label{eq:dis-2-cont}
\end{align}
where $\log$ denotes the principal matrix logarithm \cite[Chapter 11]{higham2008functions}.

%%%%%%%%%%%%%%%%%%%%%%%%%%%%%%%%%%%%%%%%%%%%%%%%%%%%%%%%%%%%%%%%%%%%%%%%%%%
%
% Linear Model Realization Based on the Koopman Operator
%
%%%%%%%%%%%%%%%%%%%%%%%%%%%%%%%%%%%%%%%%%%%%%%%%%%%%%%%%%%%%%%%%%%%%%%%%%%%
\subsection{Linear Model Realization}
\label{sec:linid}

% We can use the Koopman operator to construct discrete linear models of dynamical systems with inputs of the form
% \begin{equation}
% \begin{aligned}
%     \zvec[i+1] &= A \zvec[i] + B \uvec[i] \\
%     \yvec[i] &= C \zvec[i]
%     \label{eq:linSys}
% \end{aligned}
% \end{equation}
% for each $i \in \Natural$, where $\yvec[0] \in Y \subset \Real^n$ is the initial output, $\zvec[0]$ is the initial state, and ${\uvec[i] \in U \subset \Real^m}$ is the input applied between at the $i^{\text{th}}$ and $(i+1)^\text{th}$ steps.
% Specifically, we desire a representation in which the input appears \emph{linearly}, because models of this form are amenable to real-time, convex optimization techniques for feedback control design, as we describe in Section \ref{ch5:mpc}.

% How to choose basis functions so that A,B matrices are inside of K
The Koopman matrix $\bar{\koop}$ on the subspace spanned by a set of observables $\{ \psi_i : X \times U \to \Real \}_{i=1}^{N+m}$ can be constructed such that it is decomposable into a linear system representation.
% has the linear system matrices $A,B$ embedded within.
One way to achieve this is to define the first $N$ basis functions as functions of the state only, and the last $m$ basis functions as projections onto each component of the input, i.e.
\begin{align}
    \psi_i := 
    \begin{cases}
        z_i &\in \mathcal{Z}, \quad \text{for } i = 1,\ldots, N  \\
        \pi_{u_{i-N}} &\in \mathcal{U}, \quad \text{for } i = N+1, \ldots, N+m
    \end{cases}
\end{align}
where $\pi_{u_i}$ denotes the projection onto the $i^{\text{th}}$ component of the input.
This choice ensures that the input only appears in the last $m$ components of the lifted vector $\psivec(\xcon,\ucon)$.

The Koopman matrix can be identified from data according the steps laid out in Section~\ref{sec:koopid}.
Then, by construction, the coefficients for a linear realization of the form specified in \eqref{eq:realization-dynamics} are embedded within the first $N$ rows of the transpose of the Koopman matrix in the following manner,
\begin{align}
    \bar{\koop}^\top &= 
    \begin{bmatrix} 
        A_{\{N \times N\}} & B_{\{N \times m\}}
        \\
        \vdots & \vdots
    \end{bmatrix}
    \label{eq:AB}
\end{align}
where
\begin{align}
    &A = \begin{bmatrix}
            a_{11} & \cdots & a_{1N} \\
            \vdots & \ddots & \vdots \\
            a_{N1} & \cdots & a_{NN}
        \end{bmatrix},
    &&B = \begin{bmatrix}
            b_{11} & \cdots & b_{1m} \\
            \vdots & \ddots & \vdots \\
            b_{N1} & \cdots & b_{Nm}
        \end{bmatrix}
        \label{eq:A_i-B_i}
\end{align}
% \begin{align}
%     \bar{\koop}_{T_s}^\top &= 
%     \begin{bmatrix} 
%         \begin{bmatrix}
%             \alpha_{11} & \cdots & \alpha_{1N} \\
%             \vdots & \ddots & \vdots \\
%             \alpha_{N1} & \cdots & \alpha_{NN}
%         \end{bmatrix} &
%         \begin{bmatrix}
%             \beta_{11} & \cdots & \beta_{1m} \\
%             \vdots & \ddots & \vdots \\
%             \beta_{N1} & \cdots & \beta_{Nm}
%         \end{bmatrix} \\
%         \begin{bmatrix} O_{\{m \times N\}} \end{bmatrix} &
%         \begin{bmatrix} I_{\{m \times m\}} \end{bmatrix}
%     \end{bmatrix}
%     \label{eq:AB}
% \end{align}
% where $I$ denotes an identity matrix, $O$ denotes a zero matrix, 
and the subscripts in curly brackets denote the dimensions of each matrix.

For convenience, we can define the first $n$ basis functions as projections onto each component of the state, i.e.
\begin{align}
    \psi_i &:= \pi_{x_i} \in \mathcal{X} \subset \mathcal{Z} \quad \text{for } i = 1, \ldots, n  \label{eq:psi-1-n}
\end{align}
Then, the coefficients of the output equations of \eqref{eq:realization-output} can be defined as simply
\begin{align}
    c_{ij} &= 
    \begin{cases}
        1, &\text{for } i=j \\
        0, &\text{for } i\neq j
    \end{cases}
    \label{eq:C}
\end{align}

% %% Sysid algorithm: Linear
% \begin{algorithm}[t]
% \SetAlgoLined
% \KwIn{ $\lambda$ , $\{ \pvec\snap{k} , \qvec\snap{k} \}$ and ${ \uvec\snap{k} }$ for $k = 1 , ... , K$}
% \textbf{Step 1:} Lift data via \eqref{eq:Psi} \\
% % \textbf{Step 2:} Combine lifted data and inputs via \eqref{eq:alpha} \\
% \textbf{Step 2:} Approx. Koopman operator $\bar{\koop}_{T_s}$ via \eqref{eq:Ubar} or \eqref{eq:lasso} \\
% \textbf{Step 3:} Extract model matrices $A,B$ via \eqref{eq:AB} \\
% \textbf{Step 4:} (optional) Identify projection operator $P$ via \eqref{eq:P} \\
% \KwOut{$A$, $B$, $C := \begin{bmatrix} I_{n \times n} & O_{n \times (N-n)} \end{bmatrix}$, \\ \hspace{1.2cm} (optional) $\hat{A} := PA$, $\hat{B} := PB$}
%  \caption{Koopman Linear System Identification}
%  \label{alg:id}
% \end{algorithm}

%%%%%%%%%%%%%%%%%%%%%%%%%%%%%%%%%%%%%%%%%%%%%%%%%%%%%%%%%%%%%%%%%%%%%%%%%%%
%
% Bilinear Model Realization Based on the Koopman Operator
%
%%%%%%%%%%%%%%%%%%%%%%%%%%%%%%%%%%%%%%%%%%%%%%%%%%%%%%%%%%%%%%%%%%%%%%%%%%%
\subsection{Bilinear Model Realization}
\label{sec:bilinid}

% %% Introduce bilinear system
% We can use the Koopman operator to construct discrete bilinear models of dynamical systems with inputs of the form
% \begin{equation}
% \begin{aligned}
%     \zvec[i+1] &= A \zvec[i] + B \uvec[i] + \sum_{j=1}^{m} H_j \zvec[i] u_j [i] \\
%     \yvec[i] &= C \zvec[i].
%     \label{eq:bilinSys}
% \end{aligned}
% \end{equation}
% for each $i \in \Natural$, where $\yvec[0]$ is the initial output, $\zvec[0]$ is the initial state, and $u_j [i]$ denotes the $j^\text{th}$ component of the input applied between at the $i^{\text{th}}$ and $(i+1)^\text{th}$ steps.
% This model is simultaneously linear with respect to the input and linear with respect to the state, but it includes terms that are products of the state and input.

With a suitable choice of observables $\{ \psi_i \}_{i=1}^{N(m+1)+m}$, the Koopman matrix $\bar{\koop}$ can be constructed such that it is decomposable into a bilinear system representation.
The first $N$ observables are defined as functions of the state only,
the next $Nm$ observables are defined as the product of the first $N$ basis functions and each component of the input, and the last $m$ basis functions are defined as projections onto each component of the input, i.e.
\begin{align}
    \psi_i &:=
    \begin{cases}
        z_i \in \mathcal{Z}, &\text{for } i = 1,\ldots, N  \\
        z_{i-N} \cdot \pi_{u_{1}}, &\text{for } i = N+1, \ldots, 2N \\
        \hspace{20pt} \vdots & \hspace{25pt} \vdots \\
        z_{i-Nm} \cdot \pi_{u_{m}}, &\text{for } i = Nm+1, \ldots, N(m+1) \\
        \pi_{u_{i-N(m+1)}} \in \mathcal{U}, &\text{for } i = N(m+1)+1, \ldots,\\
        &\hspace{32pt} N(m+1)+m
    \end{cases}
    \label{eq:basis-bilinear}
\end{align}

The Koopman matrix can be identified from data according the steps laid out in Section~\ref{sec:koopid}.
Then, by construction, the coefficients for a bilinear realization of the form specified in \eqref{eq:realization-dynamics} are embedded within the first $N$ rows of the transpose of the Koopmn matrix in the following manner,
\begin{align}
    \bar{\koop}^\top &= 
    \begin{bmatrix} 
        A &
        H_{1} &
        \cdots & 
        H_{m} &
        B &
        \\
        \vdots & \vdots & \vdots & \vdots & \vdots
    \end{bmatrix}
    \label{eq:ABH}
\end{align}
where $A$ and $B$ are defined as in \eqref{eq:A_i-B_i} and
\begin{align}
    H_i &=
    \begin{bmatrix}
        h_{i11} & \cdots & h_{i1N} \\
        \vdots & \ddots & \vdots \\
        h_{iN1} & \cdots & h_{iNN}
    \end{bmatrix}.
    \label{eq:H_i}
\end{align}
% \begin{align}
%     \bar{\koop}_{T_s}^\top &= 
%     \begin{bmatrix} 
%         \begin{bmatrix}
%             \alpha_{11} & \cdots & \alpha_{1N} \\
%             \vdots & \ddots & \vdots \\
%             \alpha_{N1} & \cdots & \alpha_{NN}
%         \end{bmatrix} &
%         \begin{bmatrix}
%             h_{111} & \cdots & h_{11N} \\
%             \vdots & \ddots & \vdots \\
%             h_{1N1} & \cdots & h_{1NN}
%         \end{bmatrix} &
%         \cdots &
%         \begin{bmatrix}
%             h_{m11} & \cdots & h_{m1N} \\
%             \vdots & \ddots & \vdots \\
%             h_{mN1} & \cdots & h_{mNN}
%         \end{bmatrix} &
%         \begin{bmatrix}
%             \beta_{11} & \cdots & \beta_{1m} \\
%             \vdots & \ddots & \vdots \\
%             \beta_{N1} & \cdots & \beta_{Nm}
%         \end{bmatrix} \\
%         \begin{bmatrix} O_{\{m \times N\}} \end{bmatrix} &
%         \begin{bmatrix} I_{\{m \times m\}} \end{bmatrix}
%     \end{bmatrix}
% \end{align}

%% C matrix
Just as with the linear model realization, we can define the first $n$ basis functions as in \eqref{eq:psi-1-n}, and define the coefficients of the output equation by \eqref{eq:C}

% %% Sysid algorithm: Bilinear
% \begin{algorithm}[t]
% \SetAlgoLined
% \KwIn{ $\lambda$ , $\{ \pvec\snap{k} , \qvec\snap{k} \}$ and ${ \uvec\snap{k} }$ for $k = 1 , ... , K$}
% \textbf{Step 1:} Lift data via \eqref{eq:Psi} \\
% % \textbf{Step 2:} Combine lifted data and inputs via \eqref{eq:alpha} \\
% \textbf{Step 2:} Approx. Koopman operator $\bar{\koop}_{T_s}$ via \eqref{eq:Ubar} or \eqref{eq:lasso} \\
% \textbf{Step 3:} Extract model matrices $A,B,\{H_j\}_{j=1}^m$ via \eqref{eq:ABH} \\
% \textbf{Step 4:} (optional) Identify projection operator $P$ via \eqref{eq:P} \\
% \KwOut{$A$, $B$, $\{H_j\}_{j=1}^m$, $C := \begin{bmatrix} I_{n \times n} & O_{n \times (N-n)} \end{bmatrix}$, \\ \hspace{1.2cm} (optional) $\hat{A} := PA$, $\hat{B} := PB$, $\{\hat{H}_j\}_{j=1}^m := \{PH_j\}_{j=1}^m$ }
%  \caption{Koopman Bilinear System Identification}
%  \label{alg:bilinear-id}
% \end{algorithm}

%%%%%%%%%%%%%%%%%%%%%%%%%%%%%%%%%%%%%%%%%%%%%%%%%%%%%%%%%%%%%%%%%%%%%%%%%%%
%
% Nonlinear Model Realization Based on the Koopman Operator
%
%%%%%%%%%%%%%%%%%%%%%%%%%%%%%%%%%%%%%%%%%%%%%%%%%%%%%%%%%%%%%%%%%%%%%%%%%%%
%% Discrete Nonlinear Models
\subsection{Nonlinear Model Realization}
\label{sec:nlid-discrete}

If the Koopman matrix $\bar{\koop}$ is identified on a subspace spanned by a set observables ${ \{ \psi_i : X \times U \to \Real \}_{i=1}^M}$ other than the type specified in Sections~\ref{sec:linid} and \ref{sec:bilinid}, then the realization it produces will be nonlinear, i.e. it may contain nonlinear functions of the input.
Such a representation takes the form of a linear combination of the basis functions for each component of the state:
\begin{align}
    x_i &= \sum_{j=1}^M c_{ij} \psi_j, && \text{for } i = 1,\ldots,n
    \label{eq:nonlinear-system}
\end{align}

Assuming once again that the first $n$ observables are defined as projections onto each component of the state as in \eqref{eq:psi-1-n},
by construction, the coefficients are embedded within the first $n$ rows of the transpose of the Koopman matrix, i.e.
\begin{align}
    \bar{\koop}^\top &=
    \begin{bmatrix}
        c_{11} & \cdots & c_{1M} \\
        \vdots & \ddots & \vdots \\
        c_{n1} & \cdots & c_{nM} \\
        \vdots & \vdots & \vdots
    \end{bmatrix}
\end{align}

% %% Sysid algorithm: Nonlinear (discrete)
% \begin{algorithm}[t]
% \SetAlgoLined
% \KwIn{ $\lambda$ , $\{ \pvec\snap{k} , \qvec\snap{k} \}$ and ${ \uvec\snap{k} }$ for $k = 1 , ... , K$}
% \textbf{Step 1:} Lift data via \eqref{eq:Psi} \\
% % \textbf{Step 2:} Combine lifted data and inputs via \eqref{eq:alpha} \\
% \textbf{Step 2:} Approx. Koopman operator $\bar{\koop}_{T_s}$ via \eqref{eq:Ubar} or \eqref{eq:lasso} \\
% \textbf{Step 3:} Identify discrete map $\Tvec$ by \eqref{eq:T-nl-discrete-map} \\
% \KwOut{$\Tvec : Y \times U \to \Real^n$}
%  \caption{Koopman Nonlinear System Identification (Discrete)}
%  \label{alg:nlid-discrete}
% \end{algorithm}

%%%%%%%%%%%%%%%%%%%%%%%%%%%%%%%%%%%%%%%%%%%%%%%%%%%%%%%%%%%%%%%%%%%%%%%%%%%
%
% Control
%
%%%%%%%%%%%%%%%%%%%%%%%%%%%%%%%%%%%%%%%%%%%%%%%%%%%%%%%%%%%%%%%%%%%%%%%%%%%
\subsection{Model Predictive Control}
\label{sec:control}

Model predictive control algorithms optimally choose a sequence of control inputs given a desired output trajectory and system model \cite{rawlings2009model}.
This system model can be either linear or nonlinear, but linear models have computational advantages over nonlinear ones.
Namely, the MPC optimization problem for linear models is convex, while for nonlinear models it is not.

% % benefits of convexity
% Convex optimization problems, i.e. those which have a convex cost function and convex feasible set, have global extrema and can be solved very reliably and efficiently using interior-point methods or other special methods for convex optimization \cite{boyd2004convex}.
% Non-convex optimization problems, on the other hand, may have multiple local extrema and there is no standard method for finding global solutions.
% The most efficient non-convex optimization methods only find local solutions, which depend on an initial guess for the optimization variable.
% Thus, for real-time optimal control problems, convexity is desired.

The MPC optimization problem for linear systems is convex because it has a quadratic cost function and linear constraints.
Since it is convex, it has a unique globally optimal solution that can efficiently be constructed without initialization even for models with thousands of states and inputs \cite{boyd2004convex, paulson2014fast,stellato2018osqp}.
This contrasts sharply with the MPC formulation for nonlinear systems
which requires solving an optimization problem with nonlinear constraints and a (potentially) nonlinear cost function \cite{allgower2012nonlinear}.
As a result, algorithms to solve such problems typically require initialization and can struggle to find globally optimal solutions \cite{polak2012optimization}.
Though techniques have been proposed to improve the speed of algorithms to solve nonlinear MPC problems \cite{patterson2014gpops,hereid2017frost} or even globally solve such problems without requiring initialization \cite{zhao2017control}, these formulations still take orders of magnitude more time per iteration, which can make them too slow to be applied for real-time control.
% This difference will be highlighted in Section \ref{sec:experiments} via computation time comparisons between linear and nonlinear MPC controllers for a simulated robotic system.

% Our control algorithms
In this paper we implement three model predictive controller algorithms which we refer to by the following abbreviations:
\begin{itemize}
    \item K-MPC: Koopman-based linear MPC
    \item K-BMPC: Koopman-based bilinear MPC
    \item K-NMPC: Koopman-based nonlinear MPC
\end{itemize}
The K-MPC and K-NMPC controllers are standard model predictive controllers that use linear and nonlinear Koopman model realizations to generate predictions, respectively.
Similarly, the K-BMPC controller uses a bilinear Koopman model realization to generate predictions.
However, bilinear constraints would render the problem non-convex, so we instead rely on a linear approximation constructed by fixing the value of the lifted state in the bilinear terms over the prediction horizon, $N_h$.
The dynamics constraints in the K-BMPC optimization problem then become:
\begin{align}
    z_i[t+1] &= \sum_{j=1}^N a_{ij} z_j[t] + \sum_{j=1}^m b_{ij} u_j[t] + \sum_{j=1}^m \sum_{k=1}^N h_{ijk} z_{k}[0] u_{j}[t] \\
    &= \sum_{j=1}^N a_{ij} z_j[t] + \sum_{j=1}^m \left( b_{ij} + \sum_{k=1}^N h_{ijk} z_{k}[0] \right) u_j[t] 
    \label{eq:bilinear-linearized}
\end{align}
for $i = 1, \ldots , N$ and $t = 1,\ldots,N_h$, 
where $[\cdot]$ denotes a discrete time index.
The resulting linear dynamics approximate the behavior of the bilinear realization in a neighborhood of the initial lifted state $\{ z_i [0] \}_{i=1}^N$.
This introduces prediction error, but turns it into a convex quadratic program which can be solved quickly enough to be updated at every time-step.

%% why suboptimal is okay
The main difference between the K-BMPC controller and the other model predictive control algorithms is that it does not generate optimal solutions with respect to the model upon which it is based.
Instead, the solutions it generates are optimal with respect to a linear approximation of the actual bilinear model predictive control problem.
It is important to note, however, that all finite-dimensional Koopman realizations constructed from data are mere approximations of the true dynamics of a system.
Hence, even the optimal solutions to the model predictive control problems are not necessarily optimal with respect to the true dynamics of the system.
% For this reason, in practice K-BMPC may outperform K-MPC and K-NMPC, despite its suboptimality.

% %% MPC algorithm
% \begin{algorithm}[t]
% \SetAlgoLined
% \KwIn{ Prediction horizon: $N_h$ \\
%       \hspace{32pt} Cost matrices: $Q[i] , R[i] , q[i] , r[i]$ for  $i = 0 , ... ,N_h$ \\
%       \hspace{32pt} Constraint matrices: $E[i] , F[i] , b[i]$ for $i = 0 , ... ,N_h$ \\
%       \hspace{32pt} Model matrices: $\hat{A} , \hat{B} , \{ \hat{H}_j \}_{j=1}^m$ }
% \For{ $k = 0 , 1 , 2 , ... $}{
% \textbf{Step 1:} Set $\zopt[0] = g ( y[k] )$ \\
% \textbf{Step 2:} Solve \eqref{eq:mpc} to find optimal input $(u[i]^*)_{i=0}^{N_h}$ \\
% \textbf{Step 3:} Set $u[k] = u[0]^*$ \\
% \textbf{Step 4:} Apply $u[k]$ to the system
% }
%  \caption{Koopman-based Suboptimal Bilinear MPC (K-BMPC)}
%  \label{alg:mpc}
% \end{algorithm}
\section{Experiments and Results}
\label{sec:experiments}

To evaluate the relative performance of linear, bilinear, and nonlinear Koopman realizations for modeling and control of a realistic robotic system, we applied the methods from Section~\ref{sec:methods} to the simulated planar arm system shown in Fig.~\ref{fig:3-link-arm}.
% This simpler example is used to explore and highlight the differences between linear, bilinear, and nonlinear model realizations of the same system.

The arm has 3-links each of mass $100$g and length $0.33$m and 3 joints each with a stiffness of $1\times 10^{-5}$ N/rad and a viscous damping coefficient of $1$ Ns/rad.
The input into the system is a set of $m=3$ applied joint torques and the output is the location of the end of each link expressed as a $n=6$ dimensional vector of Cartesian coordinates,
% \Xun{should we point out that this example is a control-affine system?}\Dan{I think it's okay to leave it vague, I don't know if it is anayway.}
\begin{align}
    \vec{u}(t) &= [ \tau_1(t) ,\, \tau_2(t) ,\, \tau_3(t) ]^\top \\
    \vec{x}(t) &= [ \alpha_1(t) ,\, \beta_1(t) ,\, \alpha_2(t) ,\, \beta_2(t) ,\, \alpha_3(t) ,\, \beta_3(t) ]^\top
\end{align}
as shown in Fig.~\ref{fig:3-link-arm}.
A set of 12000 snapshots was collected for a time step of $T_s = 0.05$ seconds over a range of randomized initial conditions and inputs for the purposes of identifying matrix approximations of the Koopman operator as well as linear, bilinear, and nonlinear model realizations of the system.

\begin{figure}
    \centering
    \includegraphics[width=0.5\linewidth]{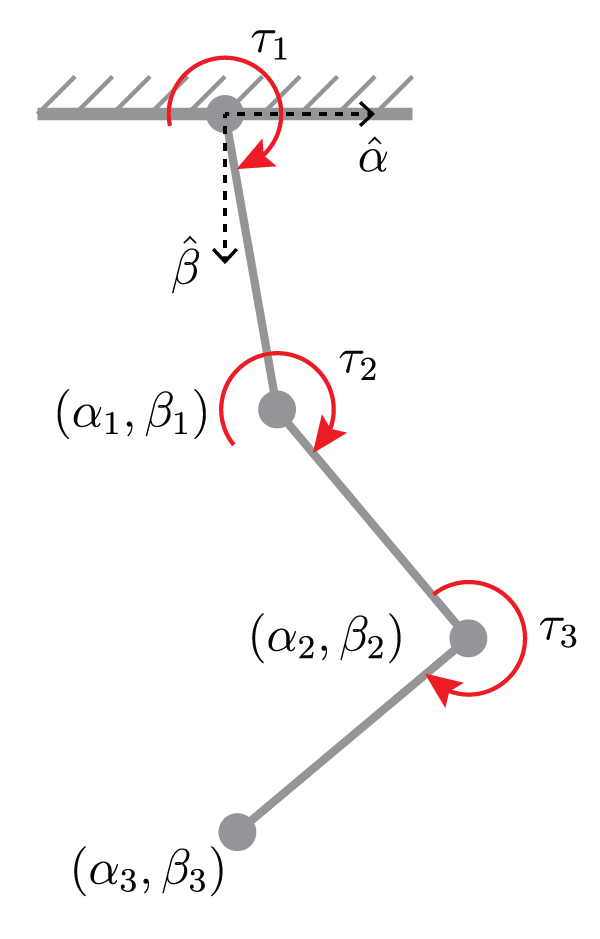}
    \caption{Three link planar arm system with input defined as joint torques and output defined as the locations of the end of each link.}
    \label{fig:3-link-arm}
\end{figure}

%% Model comparison %%%%%%%%%%%%%%%%%%%%%%%%%%%%%%%%%%%%%%%%%%%%%%%%%%%%%%%%%%%%%%%%%%%%%%%%%%%%%%%%%%%%%%%%%%%%%%%%
\subsection{Model Prediction Comparison}
\label{sec:sim-model-comparison}

The predictive accuracy of various models identified using the Koopman approach depends on both the model type (e.g. linear, bilinear, or nonlinear) as well as the number of basis functions.
We identified several models of each type on subspaces spanned by monomial basis functions.

The sets of basis functions used for identifying the linear models consisted of all monomials of the components of the state up to a specific degree denoted $\rho$, plus the projections onto each component of the input, i.e.
\begin{equation}
\begin{aligned}
    \{ \psi_i( \xcon , \ucon ) \}_{i=1}^{M} =& 
    \left\{
        \xconi_1^{\rho_1} \cdots \xconi_n^{\rho_n} 
        \middle|
        % ( \rho_1 , ... , \rho_n ) \in \mathbb{N}_0^n , 
        \rho_1 + \cdots + \rho_n \leq \rho
    \right\} \\
    &\hspace{6pt} \cup
    \left\{ 
        \uconi_i \middle| i \in \{1,...,m\} 
    \right\}
    \label{eq:lin-basis}
\end{aligned}
\end{equation}
where $M = N+m$ and $N = (n+\rho)!/(n!\rho!)$.
% and $\mathbb{N}_0$ indicates the set of natural numbers including zero.
The sets of basis functions used for identifying bilinear models consisted of the same monomials as well as the product of each monomial with each component of the input, i.e.
\begin{equation}
\begin{aligned}
    \{ \psi_i( \xcon , \ucon ) \}_{i=1}^{M} = 
    \left\{ 
        ( \xconi_1^{\rho_1} \cdots \xconi_n^{\rho_n} ) \hat{u}
        \middle|
        \rho_1 + \cdots + \rho_n \leq \rho , \right. \\ \left.
        \hat{u} \in \{ 1 , \uconi_1 , ... , \uconi_m \}
    \right\}
    \label{eq:bilin-basis}
\end{aligned}
\end{equation}
where $M = (N+1)(m+!)$ and $N = (n+\rho)!/(n!\rho!)$.
The sets of basis functions used for identifying nonlinear models consisted of monomials up to degree $\rho$ of both the input and the state, i.e.
\begin{equation}
\begin{aligned}
    \{ \psi_i( \xcon , \ucon ) \}_{i=1}^M = 
    \left\{ 
        ( \xconi_1^{\rho_1} \cdots \xconi_n^{\rho_n} ) (\uconi_1^{\rho_{n+1}} \cdots \uconi_m^{\rho_{n+m}} )
        \middle| \right. \\ \left.
        \rho_1 + \cdots + \rho_{n+m} \leq \rho
    \right\}
    \label{eq:nonlin-basis}
\end{aligned}
\end{equation}
where $M = (n+m+\rho)!/((n+m)!\rho!)$.

% What models were identified
Six linear models were identified for values of $\rho = 1,..,6$,
three bilinear models were identified for values of $\rho = 1,2,3$,
and four nonlinear models were identified for values of $\rho=1,...,4$.
Table \ref{tab:rho-vs-dimension} indicates the total number of basis functions, i.e. dim$\left(\psivec(\xcon,\ucon)\right)$, used for identifying the Koopman operator matrix for each model.
The prediction accuracy of each model was evaluated by comparing model simulations to validation data and computing the average error over all validation data points.
This error was quantified as the Euclidean distance between the predicted and real outputs in $\Real^6$.

Fig.~\ref{fig:error-vs-dimension} displays the prediction error verses the dimension of the Koopman operator matrix for each model.
The error shown in the plot is normalized by the average error incurred by the zero response.
The accuracy of the linear model increases very little, even when the dimension of the system is increased by several orders of magnitude.
The bilinear and nonlinear models become more accurate as the dimension increases.

%% TABLE: RMSE results table
\begin{table}
    \setlength\tabcolsep{10pt} % default value: 6pt
    \centering
    \caption{Number of Monomial Basis Functions for Identified Models}
    \begin{tabular}{|C{0.75cm}|C{1cm}|C{1cm}|C{1cm}|}
        \hline
        & \multicolumn{3}{c |}{\textbf{$\#$ of basis functions, i.e. $M$}} \\
        \hhline{~---}
        \multirow{-2}{*}{$\bm{\rho}$} & Linear & Bilinear & Nonlinear \\
        \hline
        1 & 10 & 28 & 10 \\
        2 & 31 & 112 & 55 \\
        3 & 87 & 336 & 220 \\
        4 & 213 & - & 715 \\
        5 & 465 & - & - \\
        6 & 927 & - & - \\
        \hline
    \end{tabular}
    \label{tab:rho-vs-dimension}
\end{table}

%% Fig: Model error verses dimension of Koopman operator
\begin{figure}
    \centering
    \includegraphics[width=\linewidth]{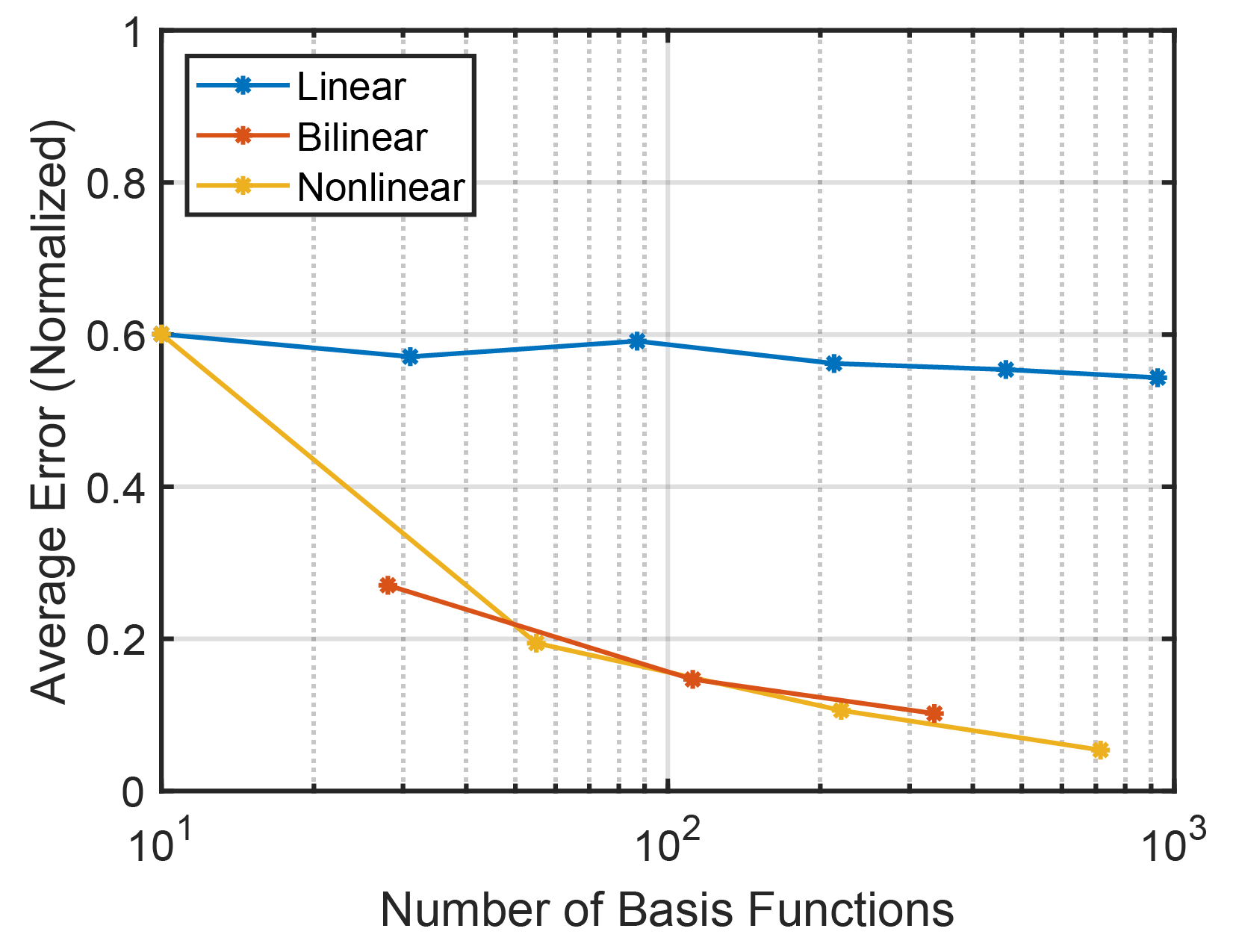}
    \caption{The model prediction error for several linear, bilinear, and nonlinear Koopman model realizations identified from the same set of data. As the number of basis functions increases, the error of the linear model changes little and the the error of the bilinear and nonlinear models decrease monotonically.}
    \label{fig:error-vs-dimension}
\end{figure}

%%%%%%%%%%%%%%%%%%%%%%%%%%%%%%%%%%%%%%%%%%%%%%%%%%%%%%%%%%%%%%%%%%%%%%%%
%
% Control Comparison
%
%%%%%%%%%%%%%%%%%%%%%%%%%%%%%%%%%%%%%%%%%%%%%%%%%%%%%%%%%%%%%%%%%%%%%%%%
%% SUBSECTION: Control Performance Comparison %%%%%%%%%%%%%%%%%%%%%%%%%%%%%%%%%%%%%%%%%%%%%%%%%%%%%%%%%%%%%%%%%%%%
\subsection{Control Performance Comparison}
\label{sec:sim-control-comparison}

%% Describe system: 3-link arm
To highlight the relative strengths and weaknesses of the Koopmans-based control techniques described in Section~\ref{sec:sim-model-comparison}, we applied them to the simulated 3-link planar arm system.
A K-MPC controller, K-BMPC controller, and K-NMPC controller was constructed from the linear, bilinear, and nonlinear model realizations identified in Section~\ref{sec:sim-model-comparison} for $\rho = 3$, respectively.
Each controller was then employed to perform the same trajectory following task.
Video of this experiment can be found in a supplementary video file\footnote{\href{https://youtu.be/9uLqNpnfr7M}{https://youtu.be/9uLqNpnfr7M}}.

%% Describe the controllers
Each controller computed solutions over a $N_h = 10$ step horizon and had identical cost functions.
% , and no state or input constraints.
The desired task was to move the end effector of the arm along a planar reference trajectory.
Therefore, a cost function was chosen that penalizes the distance between the actual end effector coordinates $(\alpha_3,\beta_3)$ and the desired coordinates $(\alpha_3^\text{ref},\beta_3^\text{ref})$ at each time-step,
% \begin{equation}
% \begin{aligned}
%     \text{Cost} \left( \{ \yopt[i] \}_{i=0}^{N_h} , \{ \uopt[i] \}_{i=0}^{N_h} \right)
%     =& 100 \left( C^\text{ref} \yopt[N_h] - \begin{bmatrix} \alpha_3^\text{ref}[N_h] \\ \beta_3^\text{ref}[N_h] \end{bmatrix} \right)^\top  \left( C^\text{ref} \yopt[N_h] - \begin{bmatrix} \alpha_3^\text{ref}[N_h] \\ \beta_3^\text{ref}[N_h] \end{bmatrix} \right) + \\
%     &+ \sum_{i=0}^{N_h - 1} 10 \left( C^\text{ref} \yopt[i] - \begin{bmatrix} \alpha_3^\text{ref}[i] \\ \beta_3^\text{ref}[i] \end{bmatrix} \right)^\top \left( C^\text{ref} \yopt[i] - \begin{bmatrix} \alpha_3^\text{ref}[i] \\ \beta_3^\text{ref}[i] \end{bmatrix} \right) + \\
%     & \quad\quad + (1 \times 10^{-3}) \, \uopt[i]^\top \begin{bmatrix} 3 & 0 & 0 \\ 0 & 2 & 0 \\ 0 & 0 & 1 \end{bmatrix} \uopt[i]
%     \label{eq:costfun-3link-arm}
% \end{aligned}
% \end{equation}
% where $C^\text{ref}$ is a selection matrix that isolates the coordinates of the end effector which are the last two components of $\yopt[i]$:
% \begin{align}
%     C^\text{ref} &= \begin{bmatrix} 0 & 0 & 0 & 0 & 1 & 0 \\ 0 & 0 & 0 & 0 & 0 & 1 \end{bmatrix}
% \end{align}
as well as the magnitude of the input to distinguish between multiple robot configurations that all achieve the desired end effector placement.

%% Describe the experimental procedure
The control experiment was conducted in simulation.
At each time-step, the current output of the system is measured and used to initialize the MPC optimization problem.
Once a solution is computed, the system is simulated forward one time-step ($T_s = 0.05$ seconds) under the optimal input.
This procedure is repeated until the end of the reference trajectory is reached.

%% Describe the results 
The reference trajectory traces out the shape of a block letter M over a time period of 15 seconds, starting from the robot's hanging position.
Fig.~\ref{fig:mpc-comparison-blockM} shows the path of the end effector using each of the controllers, and Fig.~\ref{fig:mpc-comparison} displays the mean tracking error and mean computation time over all time-steps.
The tracking error at each time-step is quantified as the Euclidean distance between the actual and desired end effector locations. The computation time per iteration is the amount of time it takes to solve the MPC optimization problem and does not include the time to simulate the response of the system.
All three trials were run on a computer with 64 GB RAM and a 2.4~GHz CPU. 

%% FIG: Trajectories of linear, bilinear, and nonlinear mpc controllers in drawing block M
\begin{figure*}
    \centering
    \includegraphics[width=0.9\linewidth]{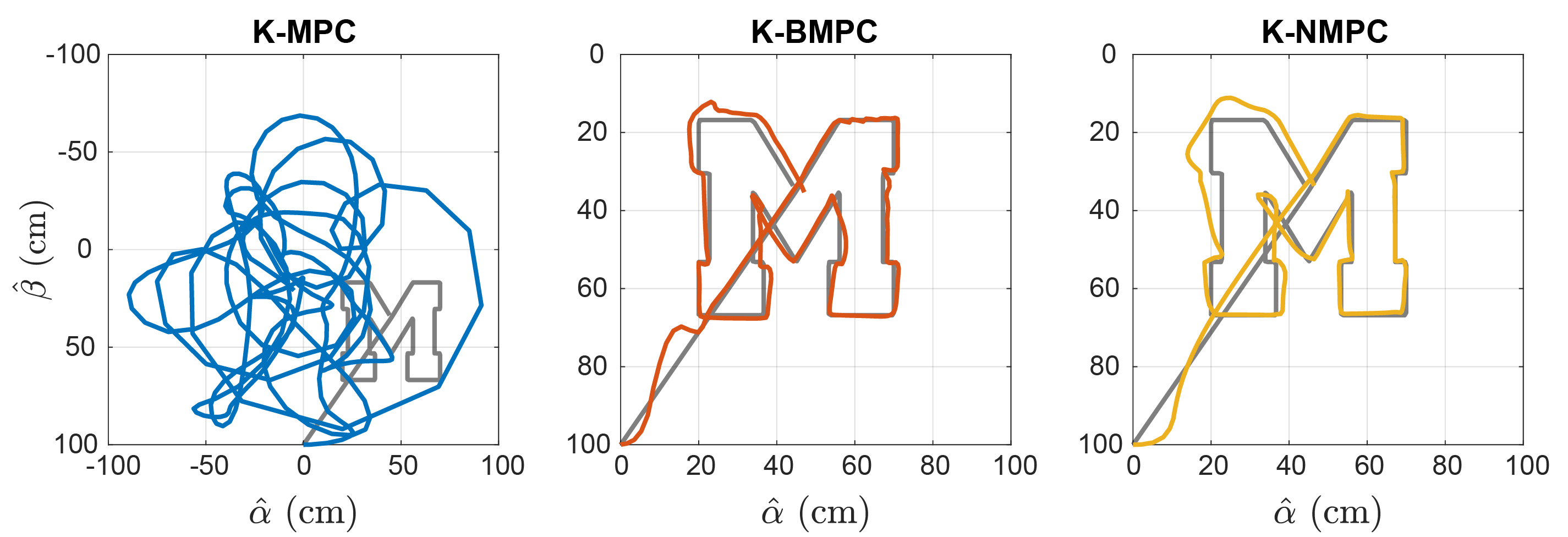}
    \caption{The end effector trajectories generated by each Koopman-based model predictive controller superimposed over the same reference trajectory, shown in grey.}
    \label{fig:mpc-comparison-blockM}
\end{figure*}

%% FIG: MPC controller error and computation time comparison
\begin{figure}
    \centering
    \includegraphics[width=0.9\linewidth]{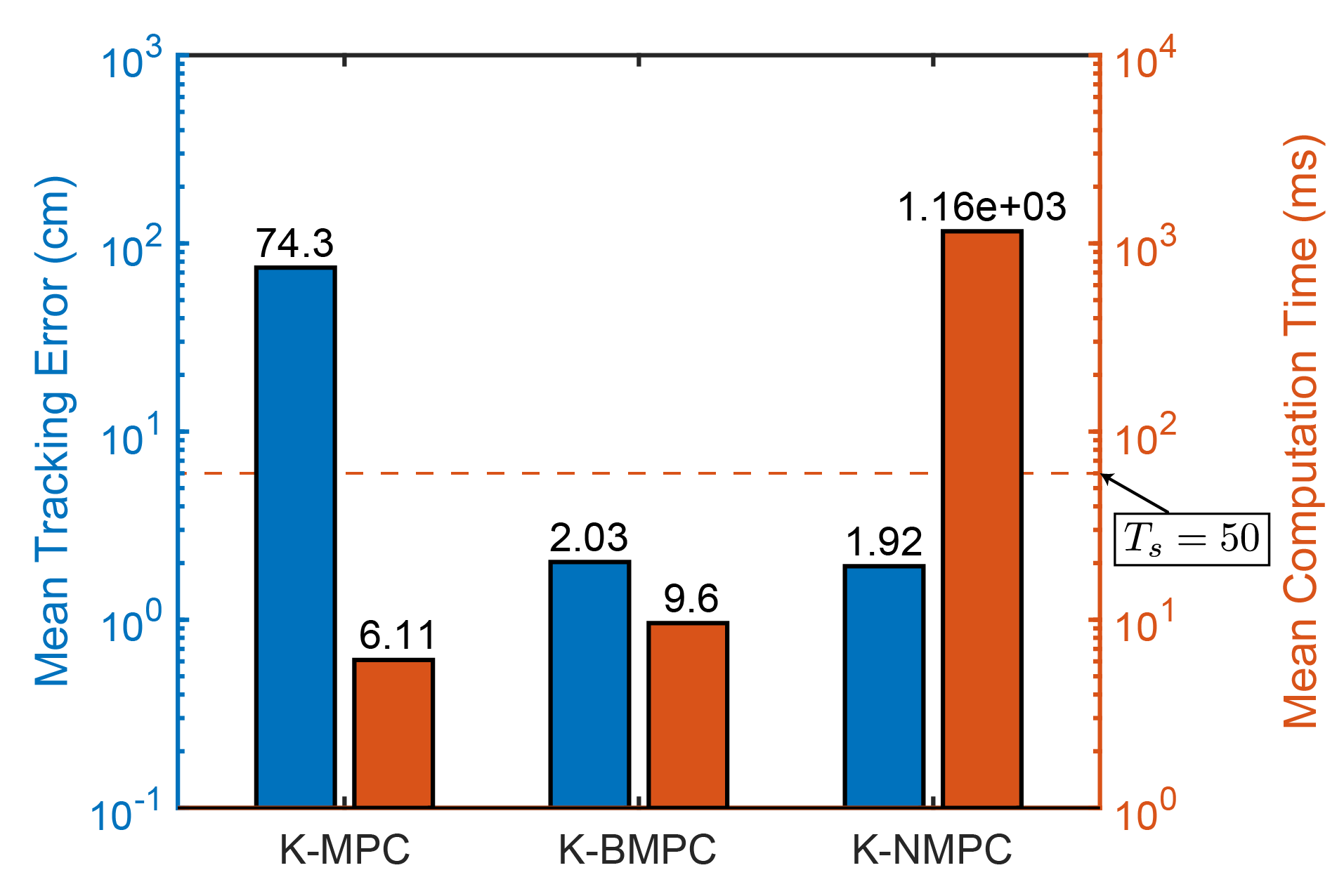}
    \caption{
    The mean tracking error (blue) and the mean computation time (orange) for each controller plotted on a logarithmic scale.
    The K-BMPC controller has a mean tracking error comparable to K-NMPC and a mean computation time comparable to K-MPC, proving it to be both accurate and computationally efficient.
    }
    \label{fig:mpc-comparison}
\end{figure}
\section{Discussion and Conclusion}    \label{sec:discussion}

%%%%%%%%%%%%%%%%%%%%%%%%%%%%%%%%%%%%%%%%%%%%%%%%%%%%%%
% Discussion of model comparison
%%%%%%%%%%%%%%%%%%%%%%%%%%%%%%%%%%%%%%%%%%%%%%%%%%%%%%

%% Model comparison
The results of the model prediction comparison described in Section~\ref{sec:sim-model-comparison} illustrates the advantages of bilinear realizations for systems with unknown dynamics.
As expected, as the number of basis functions increases, the prediction error of the bilinear model decreases, indicating progress toward a true infinite-dimensional bilinear realization.
The linear model, on the other hand, does not improve with the inclusion of more basis functions, indicating that an infinite-dimensional linear realization over monomial basis functions probably doesn't exist.

%%%%%%%%%%%%%%%%%%%%%%%%%%%%%%%%%%%%%%%%%%%%%%%%%%%%%%
% Discussion of control comparison
%%%%%%%%%%%%%%%%%%%%%%%%%%%%%%%%%%%%%%%%%%%%%%%%%%%%%%

%% Discuss accuracy
It is clear by inspection of Fig.~\ref{fig:mpc-comparison-blockM} that the K-MPC controller performs very poorly.
This is confirmed quantitatively in Fig.~\ref{fig:mpc-comparison} which shows that its mean tracking error is more than 15$\times$ larger than that of the other controllers.
This poor performance can be attributed to the inaccuracy of the linear Koopman model realization upon which it is based, which is documented in Fig.~\ref{fig:error-vs-dimension}.
The K-BMPC and K-NMPC controllers track the desired trajectory with much greater fidelity, reflecting the greater accuracy of the bilinear and nonlinear model realizations upon which they are based.

%% Discuss speed
In Section~\ref{sec:realizations}, we asserted that K-NMPC is much less computationally efficient than the other controllers, and that is confirmed by the results of this experiment.
As seen in Fig.~\ref{fig:mpc-comparison}, the mean computation time for K-NMPC is more than 500$\times$ larger than that of the other two controllers.
This computation time greatly exceeds the $50$~ms duration of a single time-step, making it incompatible with closed-loop operation.
Hence, if this robot were a real physical system, the control inputs would have to be computed offline.

%% Bilinear is best
Based on the results of this experiment, only K-BMPC would be a viable closed-loop controller for this system.
Its mean computation time is much less than a single time-step, and despite the suboptimality of its solutions, its mean tracking error is nearly equivalent to that of K-NMPC.
Roughly speaking, K-MPC fast but inaccurate, K-NMPC is accurate but slow, and K-BMPC is both fast and accurate.

% %% Caveats
% The benefits of using a bilinear Koopman realization to model and control a robot with unknown dynamics are illustrated by this example, but the difference in performance between K-BMPC and K-MPC may not always be so pronounced. 
% Some systems may admit an accurate linear Koopman model for a generic choice of observables, in which case a K-MPC controller may be sufficient.
% It is up to the control designer to determine which realization and controller is best suited for each particular system.

%%%%%%%%%%%%%%%%%%%%%%%%%%%%%%%%%%%%%%%%%%%%%%%%%%%%%%
% Conclusion
%%%%%%%%%%%%%%%%%%%%%%%%%%%%%%%%%%%%%%%%%%%%%%%%%%%%%%

This works shows that when the dynamics of a system are completely unknown, a bilinear Koopman realization constructed from data is likely to yield better overall modeling and control results than a linear or nonlinear Koopman realization. 
% \Ram{uh you don't show that it'll do better at modeling and control than a nonlinear one...}\Dan{We do in the experimental section}.
This is justified theoretically in Section~\ref{sec:theory} and demonstrated practically in Section~\ref{sec:experiments}.
We proved that control-affine systems have infinite-dimensional bilinear realizations but not necessarily linear ones.
Therefore, approximate bilinear realizations constructed from generic sets of basis functions improve as the number of basis functions increases, whereas approximate linear realizations may not.

Bilinear realizations combine the computational efficiency of linear realizations with the prediction accuracy of nonlinear realizations.
However, the bilinear model predictive control framework used in this paper could likely be improved.
Further work should investigate theoretical guarantees for bilinear controllers, and explore new approaches to optimal control of bilinear dynamical systems.

%%%%% EXTRA UNUSED TEXT BELOW THIS LINE
% This stems from the fact that an arbitrary (control-affine systems admit infinite-dimensional Koopman
% Linear Koopman realizations, although computationally efficient, are not likely to 
% Nonlinear realizations do not suffer from this problem, but are too 

% This stems from the fact that all control-affine systems admit infinite-dimensional Koopman
% This is due to the fact that every control-affine systems 

% \addtolength{\textheight}{-12cm}   % This command serves to balance the column lengths
%                                   % on the last page of the document manually. It shortens
%                                   % the textheight of the last page by a suitable amount.
%                                   % This command does not take effect until the next page
%                                   % so it should come on the page before the last. Make
%                                   % sure that you do not shorten the textheight too much.

% \input{sections/acknowledgement.tex}

\bibliographystyle{IEEEtran}
\bibliography{references}

\end{document}